\documentclass[11pt]{article}

\topmargin=-0.4in
\topskip=0pt
\headsep=15pt
\oddsidemargin=0pt 
\textheight=9in 
\textwidth=6.5in 
\voffset=0in

\usepackage[utf8]{inputenc} % allow utf-8 input
\usepackage[T1]{fontenc}    % use 8-bit T1 fonts
\usepackage{hyperref}       % hyperlinks
\usepackage{url}            % simple URL typesetting
\usepackage{booktabs}       % professional-quality tables
\usepackage{amsfonts}       % blackboard math symbols
\usepackage{nicefrac}       % compact symbols for 1/2, etc.
\usepackage{microtype}      % microtypography
\usepackage{xcolor}         % colors
\usepackage{amsthm}
\usepackage{algorithm}
\usepackage{algpseudocode}
\usepackage{amsmath}
\usepackage{subcaption}
\usepackage[capitalize]{cleveref}
\usepackage[numbers]{natbib}
\usepackage{graphicx}

\newtheorem{theorem}{Theorem}

\newtheorem{corollary}[theorem]{Corollary}

\DeclareMathOperator*{\argmin}{arg\,min}

\newcommand{\pred}{\hat{p}}

\title{Binary Search with Distributional Predictions}

\author{
Michael Dinitz\thanks{Supported in part by NSF awards 1909111 and 2228995.}\\Johns Hopkins University\\ \texttt{mdinitz@cs.jhu.edu}
\and 
Sungjin Im\thanks{Supported in part by NSF awards 1844939, 2121745, and 2423106, and by ONR grant N00014-22-1-2701.} \\UC Merced\\ \texttt{sim3@ucmerced.edu}  
\and 
Thomas Lavastida\\ University of Texas at Dallas\\ \texttt{thomas.lavastida@utdallas.edu}  
\and 
Benjamin Moseley\footnotemark[3]\\ Carnegie Mellon University\\ \texttt{moseleyb@andrew.cmu.edu}  
\and 
Aidin Niaparast\thanks{Supported in part by a Google Research Award, an Infor Research Award, a Carnegie Bosch Junior Faculty Chair, NSF grants CCF-2121744 and CCF-1845146 and ONR Grant N000142212702.
} \\ Carnegie Mellon University\\ \texttt{aniapara@andrew.cmu.edu}
\and
Sergei Vassilvitskii\\Google Research\\  \texttt{sergeiv@google.com}
}

\date{}
\begin{document}

\maketitle
\begin{abstract}
 Algorithms with (machine-learned) predictions is a powerful framework for combining traditional worst-case algorithms with modern machine learning.  However, the vast majority of work in this space assumes that the prediction itself is non-probabilistic, even if it is generated by some stochastic process (such as a machine learning system).  This is a poor fit for modern ML, particularly modern neural networks, which naturally generate a \emph{distribution}.  We initiate the study of algorithms with \emph{distributional} predictions, where the prediction itself is a distribution.  We focus on one of the simplest yet fundamental settings: binary search (or searching a sorted array).  
 This setting has one of the simplest algorithms with a point prediction, but what happens if the prediction is a distribution?  We show that this is a richer setting: there are simple distributions where using the classical prediction-based algorithm with any single prediction does poorly.  
 Motivated by this, as our main result, we give an algorithm with
 query complexity
 $O(H(p) + \log \eta)$, where $H(p)$ is the entropy of the true distribution $p$ and $\eta$ is the earth mover's distance between $p$ and the predicted distribution $\hat p$.  This also yields the first \emph{distributionally-robust} algorithm for the classical problem of computing an optimal binary search tree given a distribution over target keys.
 We complement this with a lower bound showing that this query complexity is essentially optimal (up to constants), and experiments validating the practical usefulness of our algorithm.  
\end{abstract}

\section{Introduction}
Algorithms with predictions, or algorithms with machine-learned advice, has proved to be a useful framework for combining machine learning (which is extremely useful in the usual case but can be quite bad when in the worst case) with traditional worst-case algorithms (which are quite good in the worst-case but do not do as well as we might hope in the usual case).  While similar ideas have appeared in many places in the past, the formal study of this setting was pioneered by~\citet{lykouris2021competitive}, and was particularly motivated by the practical success of learned index structures~\citep{Kraska}.  The goal is usually to design an algorithm for some classical and important problem in the setting when we are also provided with some type of ``advice'' or ``prediction'' (presumably given by some machine learning algorithm) as to what the instance is like.  If the advice is ``good'' then we want our algorithm to do extremely well, while if the advice is ``bad'' then we want to inherit the traditional worst-case guarantee.  In other words, we want the best of both worlds: good performance in the average case thanks to machine learning, but also robustness and good performance in the worst-case from traditional algorithms.  

Consider the basic problem of searching for a key in a sorted array.  This problem is the first example considered in the survey of~\citet{MitzenmacherVassilvitskii}, as it is perhaps one of the simplest yet also best-motivated settings.  Given a sorted array with $n$ elements and a target key $i$, we can of course do a binary search for $i$ using only $O(\log n)$ comparisons to find its location $\alpha(i)$.  
But suppose that we additionally receive a \emph{prediction} $\hat{\alpha}(i) \in [n]$ of the location of the target key $i$ in the array, possibly from some machine learning system which attempts to predict the correct location for each key.  If the prediction is perfect ($\alpha(i) = \hat{\alpha}(i)$), then it is easy to use this prediction: only one comparison is needed!  On the other hand, if the prediction is meaningless, then we can run classical binary search.  But what if the prediction is ``close''?  It turns out that ``doubling binary search'' from the predicted point can be used to design an algorithm which makes at most $O(\log(|\hat{\alpha}(i) - \alpha(i)|))$ comparisons \citep{MitzenmacherVassilvitskii}.  So if the prediction is very close to the true location then we make very few queries, while if it is very far away then we recover the traditional binary search comparison bound.

The ability to obtain these types of results has led to an explosion of interest in algorithms with predictions; see Section~\ref{sec:other-related} for some references.
Sometimes the prediction itself is simple, as in the search problem, while sometimes it is quite complex, for instance encompassing a multi-dimensional vector.  
However, with only a few exceptions such as \citet{pmlr-v139-diakonikolas21a} and~\citet{angelopoulos2024contract}, which will be discussed in Section~\ref{sec:other-related} in detail,
all of these papers share an important feature: the prediction itself is non-probabilistic.  
That is, the prediction is a single (potentially high-dimensional) point (or maybe a small number of such points).
While making the setting simpler, this is not a good match with the actual output of most ML systems (particularly modern neural networks), which inherently output a \emph{distribution}. 
The question we study in this work is how to take advantage of the full richness of the prediction. Of course, we can always turn a distribution into a single point in any number of ways (using a max likelihood estimator (MLE), sampling from the distribution, etc.).  
But is that always the right thing to do?  Or, can we in fact do \emph{better}
by taking full advantage of the entire predicted distribution?

\subsection{Our Results and Contributions}

In this paper we initiate the study of algorithms with \emph{distributional predictions}, focusing on the basic search in a sorted array problem described above. In addition to the classic $O(\log n)$ comparisons binary search for an array of size $n$, we recall the ``median'' or ``bisection''
algorithm (first described by \citet{Knuth71} and analyzed by \citet{mehlhorn1975nearly}) which probes the cell representing the median of the distribution, and recurses appropriately. 
When the target keys are indeed drawn from the given distribution, the expected query complexity (i.e. number of comparisons between elements in the array and the target) is bounded by $H(p) + 1$, where $H(p)$ is the entropy of the distribution~\citep{mehlhorn1975nearly}. (Note that when the distribution is uniform over $n$ elements, this recovers the $O(\log n)$ binary search bound).  This is in fact essentially optimal: it is known that every algorithm requires at least $H(p) / 3$ queries in expectation when target keys are drawn from $p$~\citep{mehlhorn1975nearly}.  

On the other hand, it is easy to see that if target keys are \emph{not} drawn from $p$, then this algorithm can be arbitrarily bad: it can easily be made to use $\Omega(n)$ comparisons in expectation.  So our main question is the following: how can we best utilize a prediction $\hat{p}$ which is \emph{not} the true distribution $p$?  Can we recover the near-optimality of the median algorithm without being subject to its worst-case performance?

\paragraph{Reduction to point distributions.}  We first show in Section~\ref{sec:point} that the obvious approach, of reducing $\hat{p}$ to a point prediction (whether by sampling, using a max-likelihood prediction, or some other method) and then using previous algorithms, is a bad idea that can lead to poor worst-case performance.  In addition to ruling out a natural class of algorithms, this gives additional motivation to our study of distributional predictions: as discussed, essentially all previous work studies the case in which the prediction is a single point (in this case, location); yet, most machine-learning systems will actually output a distribution.  So our lower bound implies that any of these traditional point-based algorithms, no matter how good the bound obtained compared to their prediction, must suffer fundamentally poor performance in the real world where target keys actually come from a distribution.  

\paragraph{Main algorithm.} We then give our main result in Section~\ref{sec:alg}: an algorithm which interleaves phases of the ``median'' algorithm and classical binary search to obtain a query complexity of  $O(H(p) + \log \eta)$, where $\eta$ is the earth mover's distance (EMD, also known as the Wasserstein $W_1$ metric) between $p$ and $\hat{p}$, See Section \ref{sec:alg} for precise details.  Note that $H(p) \leq O(\log n)$ and $\log \eta \leq \log n$.  So if our prediction $\hat p$ is close to $p$, then our algorithm has performance essentially equal to the best possible bound $H(p)$.  On the other hand, if our prediction $\hat p$ is far from $p$ (so provides essentially no information), we do not suffer the poor performance of naively believing in $\hat p$ and running the median algorithm on it, instead recovering a bound of $O(H(p) + \log \eta) = O(\log n)$.  

While there are many notions of ``distance'' between distributions, EMD is a natural one in our setting.  Many other notions of distance, like $\ell_1$, do not take the geometry of the line into account.  For example, consider some distribution $p$ over $[n]$, and let $p'$ be the distribution obtained from $p$ by moving $\gamma/2$ probability mass from $1$ to $2$, and let $p''$ be the distribution obtained from $p$ by moving $\gamma/2$ probability mass from $1$ to $n$.  Then the $\ell_1$ distance between $p$ and $p'$ is $\gamma$, and so is the $\ell_1$ distance between $p$ and $p''$.  Yet clearly $p'$ is a ``more accurate'' prediction for $p$.  The earth mover's distance recognizes this fact, and so is a more appropriate measure than $\ell_1$.  Similarly, popular measures such as KL-divergence (which are not technically metrics, but do give a notion of distance) suffer the same flaws as $\ell_1$ while also being extraordinarily sensitive to mismatches in the support (the KL-divergence can be infinite if the supports of the two distributions do not agree). 

\paragraph{Distributional robustness of optimal binary search trees.}

We have so far discussed a distributional prediction setting where a target key arrives with a predicted distribution $\pred$ of its location. 
This is a strict generalization of~\citep{MitzenmacherVassilvitskii}, where the prediction is a single location in the array. 
In their model, the location is error prone and in ours the distribution over locations is error prone. Our goal is to construct an effective search strategy given the target and the prediction. 

But there is another related setting: there is a single (unknown) distribution over target keys, and we are given a (possibly erroneous) prediction of this distribution and are asked to design a search algorithm with minimum expected lookup time when target keys are drawn from the true distribution.
In other words, instead of each target key coming with a predicted distribution $\pred$ over \emph{locations} in $[n]$ and the true location being drawn from some true distribution $p$, we are given ahead of time a predicted distribution $\pred$ over \emph{target keys} $[n]$, and are asked to design a lookup algorithm for inputs that use this distribution. 
But then these target keys in the input are actually drawn from $p$ rather than $\pred$, and do not come with target-specific predictions. 

Since any comparison-based search algorithm is equivalent to a particular binary search tree, if $\pred = p$ then this is precisely the classical problem of computing an \emph{optimal binary search tree}~\citep{mehlhorn1975nearly}.  So we can interpret our results as providing distributionally-robust optimal BSTs: given $\pred$, we can efficiently compute a BST where the expected lookup time under the true (but unknown) query distribution $p$ is at most $O(H(p) + \log \eta)$.  Surprisingly, given the classical nature of computing optimal (or near-optimal) BSTs, this simple question of “what if my distribution is incorrect?” has not been considered in the data structures and algorithms literature.

\paragraph{Worst case lower bound.} We complement our algorithmic development with a lower bound in Section~\ref{sec:lower}, proving that no algorithm can use fewer than $\Omega(\log \eta)$ queries in the worst case.  Since $\Omega(H(p))$ is a known lower bound as well even if $p$ is known perfectly~\citep{mehlhorn1975nearly}, this implies that our algorithm is asymptotically tight.  So if we measure accuracy of the prediction via EMD, no algorithm can make asymptotically better use of a distributional prediction.

\paragraph{Portfolios of predictions.} There has been recent interest in the study of algorithm with \emph{multiple} predictions, sometimes called prediction \emph{portfolios}.  See, for example, \citep{BalcanSV21,DinitzILMV22,AnandGKP22,Kevi}.  The goal is usually to do as well as the \emph{best} of the predictions in the portfolio, with the difficulty being that we do not know \emph{a priori} which of these predictions is best.  We extend our main algorithm to this setting in Section~\ref{sec:portfolio}, showing that it is possible to use \emph{multiple} distributional predictions effectively.  

\paragraph{Experiments.} Finally, in Section~\ref{sec:exps} we give empirical evidence of the efficacy of the algorithm we propose. We first use synthetic data to demonstrate the effect of distribution error on the performance of the algorithm. We then evaluate it on a number of real world datasets. 

\subsection{Other Related Work}
    \label{sec:other-related}

Machine learning augmented algorithms have found applications in various areas --- for example, online algorithms \citep{lykouris2021competitive,Purohit}, combinatorial algorithms \citep{DinitzILMV21,davies2023predictive}, differential privacy \citep{abs-2210-11222}, data structures \citep{lin2022learning,VaidyaKMK21,mccauley2024online,pmlr-v235-mccauley24a} and mechanism design \citep{agrawal2022learning}, to name a few. In particular, online algorithms have been extensively studied with ML advice for various problems, such as online caching \citep{lykouris2021competitive}, ski-rental \citep{Purohit}, scheduling \citep{LattanziLMV}, knapsack \citep{ImKQP21}, set cover \citep{BamasMS20}, and more~\citep{ALPSweb}. Due to the vast literature, we only provide a few samples.

Particularly relevant to our setting is the work of \citet{lin2022learning}, which initiated the study of predictions for binary search trees.
This work investigates how to improve a treap's guarantees when item frequencies follow distributions such as the Zipfian distribution.

As discussed earlier, in ML augmented algorithms, predictions are typically given in the form of specific values, rather than distributions. Here, we discuss a few exceptions. 
The work by \citet{pmlr-v139-diakonikolas21a} studies the ski-rental problem and prophet inequalities with access to i.i.d. samples from an unknown distribution. 
Their focus lies on the sample complexity and not on the correctness of the distribution. 
Indeed, they obtain robustness by combining their consistent algorithm and the best worst-case algorithm in a black-box manner. 
In contrast, we assume full access to a distributional prediction and develop new ideas to obtain a tight trade-off between consistency and robustness, in conjunction with natural error measures involving entropy and the earth mover's distance, which take the accuracy of the distributional prediction into account. 

More broadly the question of how to improve performance of problems where either full instances, or some model parameters come from a known distribution is well studied under the rubric of two-stage stochastic optimization \citep{swamy2006approximation,ahmed2010two}, with techniques like Sample Average Approximation (SAA) \citep{kim2015guide} having a rich history. 
Similar to our setting, one can look at the robust setting, where the distribution available to the algorithm is different from the true distribution that examples are drawn from, see e.g., \citep{bertsimas2010power, DuttingK19, bertsimas2022two, BesbesMM22}. 
Typically in these cases one assumes a bound on the difference between the two distributions, explicitly choosing what to hedge against, and then derives optimal strategies. In contrast, in this work we aim to find a smooth trade-off on the performance of the algorithm as a function of the distance between the two distributions, coupled with an upper bound on worst-case performance. 

Finally, the very recent work of \citet{angelopoulos2024contract} is one of the few that consider distributional predictions. It shows an optimal tradeoff between consistency and robustness for a scheduling problem.
However, their solution space explored is considerably more limited than ours, essentially consisting of geometric sequences with a multiplicative ratio of 2, each characterized by its starting point.
 Further, their bound analysis is restricted to cases where the error is sufficiently small. In contrast, our work demonstrates how a binary search algorithm can compare generally to earth mover's distance (EMD) and a lower bound on the optimum for any predicted distribution. As a result, we develop novel algorithmic solutions that build upon a close connection to EMD. 

\section{Preliminaries}
We now formally describe our problem and setting.  
Let $a_1 < \dots < a_n$ be a set of $n$ keys, and $p = (p_1, \ldots, p_n)$ be a probability distribution over the keys. 
Our goal is to develop a search strategy (or search algorithm), which takes a target key $a$ as input, and finds a position $i$ such that $a_i = a$.
We aim to find search strategies with low expected search cost when the target key $a$ is sampled according to the distribution $p$.

In our analysis we will consider the number of comparisons, also known as the {\em query complexity}, as the main metric of study. 
This metric captures the information theoretic complexity of the problem, and ignores computational overhead. 
\color{black}Formally, let the search cost $C(a_i)$ of finding $a_i$ be the number of comparisons done by the algorithm when the target key is $a_i$. The expected search cost is then $\sum_{i=1}^n p_i C(a_i)$. 

To aid in this goal, we are given a prediction,  $\pred=(\pred_1,\ldots,\pred_n)$, of $p$. To account for the fact that the predicted distribution may be incorrect, we let $\eta$ denote the earth mover's distance (EMD) between $p$ and $\pred$. The EMD between two distributions $P$ and $Q$ is the solution to the optimal transport problem between them, or more formally, it is $\inf_{\gamma \sim \Pi(P,Q)} {\mathbb E}_{(x,y) \sim \gamma}[d(x,y)]$, where $\Pi(P,Q)$ is the set of joint distributions with marginals $P$ and $Q$.

Finally, for a distribution $p$, we let $H(p)=-\sum_{i=1}^n p_i \log(p_i)$ be the entropy of $p$. 
Throughout the paper, all logarithms are in base 2. 

Given the breadth of work on point predictions, it is tempting to try and reduce the distributional prediction problem to point predictions. We show that this approach does not lead to good results. 

\subsection{Point Predictions from Distributions} \label{sec:point}
Given prediction $\pred$ of $p$, suppose the algorithm first computes some point $\hat \alpha$ from $\pred$ and then uses the doubling binary search algorithm from~\citet{MitzenmacherVassilvitskii} with prediction $\hat \alpha$.  This will have expected running time of $O(\log(|\hat \alpha - \alpha|))$, where $\alpha$ is the true location of the key. A natural question is whether there is some $\hat \alpha$ so that $O(\log(|\hat \alpha - \alpha|))$ is comparable to $O(H(p) + \log \eta)$.  

Unfortunately, this is not possible, necessitating our more involved algorithm and analysis. 
Let $p$ be the distribution on two atoms, with $p_{n/4} = 1/2$ and $p_{3n/4} = 1/2$, and let $\pred = p$. 
Clearly $\eta = 0$, and $H(p) = 1$, so any competitive algorithm must terminate after a constant number of comparisons in expectation.  
On the other hand, consider some $\hat \alpha \in [n]$.  If $\hat \alpha \leq n/2$, then since $\alpha = 3n/4$ with probability $1/2$ we have that $\mathbb{E}[\log(|\hat \alpha - \alpha|)] \geq \frac12 \log(n/4)) = \Omega(\log n)$.  
Similarly, if $\hat \alpha \geq n/2$, then since $\alpha = n/4$ with probability $1/2$ we have that $\mathbb{E}[\log(|\hat \alpha - \alpha|)] \geq \Omega(\log n)$.  
Hence converting $\pred$ to a point prediction and then using the algorithm of \cite{MitzenmacherVassilvitskii} as a black box is doomed to failure.

\section{Algorithm} \label{sec:alg}
To develop our robust approach, recall the two baseline algorithms---traditional binary search with an $O(\log n)$ running time and the algorithm that recurses on the median element of the distribution, with an $O(H(p))$ running time (assuming it has access to the true distribution $p$). 

In our algorithm we interleave these two approaches to get the best of both worlds. 
Let ${a \in \{a_1, \ldots, a_n\}}$ be the target key. We proceed recursively, keeping track of an active search range $[\ell, r]$ (if $a=a_i$, we always have $i \in [\ell,r])$. Initially, we start with $\ell = 1$ and $r = n$. The algorithm proceeds in iterations.
Each iteration $i$ for $i=0,1,\ldots$ has two phases
\begin{itemize}
    \item \textbf{Bisection.} Divide the search range in half based on the predicted probabilities $\pred$. Formally, find an index $k$, $\ell \leq k \leq r$ such that 
    $\sum_{j=\ell}^{k-1}\pred_j \leq \frac{1}{2} S$ and $\sum_{j=k+1}^r \pred_j \leq \frac{1}{2} S$, where $S = \sum_{j = \ell}^r \pred_j$. Compare $a$ to $a_k$. If they are equal, return $k$.
    Otherwise, based on the result of the comparison, continue the search on the ranges $[\ell,k-1]$ or $[k+1,r]$. 

    Continue this process for $2^i$ steps, and if $a$ is not found, begin the second phase.
    
    \item \textbf{Binary Search at the Endpoints.} Let $[\ell,r]$ be the current search range. 
    Set  $d = \min(2^{2^i},r-\ell)$.
    Check if $a$ is in the range $[\ell,\ell + d]$ or $[r-d,r]$ (by comparing $a$ to $a_{\ell+d}$ and $a_{r-d}$).
    If $a$ is in one of these ranges, say $[\ell,\ell+d]$,  do a regular binary search (by choosing the middle point of the range each time) on the range $[\ell,\ell+d]$, until $a$ is found.
    Otherwise, start the next iteration with the new search range $[\ell+d+1,r-d-1]$. 
\end{itemize}

The algorithm continues until $a$ is found.
\subsection{Analysis}

The goal is to show the following theorem with respect to the algorithm.

\begin{theorem}\label{thm:analysis}
    The expected query complexity of the described algorithm is at most $4H(p)+8\max(\log(\eta)+2,1)+8 =O(H(p)+\max(\log(\eta),0))$.\footnote{To account for the case where $\eta \in [0,1)$ where $\log(\eta) < 0$, we impose a bound by taking the maximum of $\log(\eta)$ and 0.}
\end{theorem}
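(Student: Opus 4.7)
The plan is to reduce the bound to $\mathbb{E}_{j \sim p}[2^{I_j}]$, where $I_j$ is the iteration in which target $a_j$ is found, and then bound this expectation via an EMD-coupling argument. To set this up, observe that iteration $i$ performs at most $2^i$ bisection probes, two endpoint comparisons, and (only if $a$ lies in an endpoint block) at most $\log d_i = 2^i$ comparisons to binary-search inside that block; hence the total number of comparisons to find $a_j$ is at most $c \cdot 2^{I_j}$ for a small constant $c$.

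The core technical lemma I would prove is: for every $j$ and every $k$ with $\pred_k > 0$,
\[
2^{I_j} \le C\bigl(\log(1/\pred_k) + \log(|j-k|+1) + 1\bigr).
\]
The proof compares the bisection-on-$\pred$ paths of $j$ and $k$. The Knuth/Mehlhorn depth bound gives that $k$ reaches a singleton by bisection depth $\lceil\log(1/\pred_k)\rceil + 1$, so the paths of $j$ and $k$ diverge at some step $t^* \le \log(1/\pred_k) + O(1)$. At the divergence step the pivot strictly separates $j$ and $k$, so immediately after, $j$ lies in a subrange one of whose endpoints is within $|j-k|$ of $j$. A monotonicity check shows this ``close'' distance is non-increasing under every subsequent bisection step and every iteration-level endpoint trim. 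Consequently the endpoint-check phase of iteration $i$ captures $a_j$ as soon as iteration $i$ has been reached ($2^{i+1}-1 \ge t^*$) and $d_i = 2^{2^i} \ge |j-k|$. Taking the smallest $i$ satisfying both conditions bounds $I_j$ by $\max\bigl(\lceil\log t^*\rceil, \lceil\log\log(|j-k|+1)\rceil\bigr) + O(1)$, and exponentiating yields the claim.

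Now take an optimal coupling $\pi$ between $p$ and $\pred$ achieving EMD $\eta$ and apply the lemma in expectation:
\[
\mathbb{E}[2^{I_j}] \le C \sum_{j,k} \pi(j,k)\bigl(\log(1/\pred_k) + \log(|j-k|+1)\bigr) + C.
\]
The first sum equals $H(\pred)$ by the marginal property $\sum_j \pi(j,k) = \pred_k$; the second is at most $\log(1+\eta)$ by Jensen's inequality applied to the concave $\log(x+1)$. To convert $H(\pred)$ to $H(p)$, write $(J,K)\sim\pi$ and use $H(\pred) = H(K) \le H(J) + H(K\mid J) = H(p) + H(K\mid J)$. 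For each $j$, the conditional law of $K \mid J=j$ is an integer-valued distribution whose mean absolute deviation from $j$ is $D_j := \sum_k (\pi(j,k)/p_j)|j-k|$, so a standard max-entropy-under-first-moment estimate bounds its entropy by $O(\log(D_j+1))$; Jensen again gives $\sum_j p_j \log(D_j+1) \le \log(1+\eta)$. Combining yields $\mathbb{E}[2^{I_j}] = O(H(p) + \log(\eta+1))$, matching the theorem up to constants.

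The main obstacle I expect is the divergence-plus-monotonicity argument. One has to verify that the invariant ``$j$ is within $|j-k|$ of some endpoint of the current range'' survives not only bisection steps that narrow toward $j$ but also those that chop the opposite side, and that it persists through the iteration-level endpoint trims that restart the next iteration on a strictly smaller interval. A more minor subtlety is the entropy-EMD conversion step: one must argue that the max-entropy bound for an integer distribution with bounded mean-absolute-value is $O(\log(D+1))$, and that the outer Jensen step through the $p$-weighted average of the $D_j$'s cleanly produces $\log(1+\eta)$.
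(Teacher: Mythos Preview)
Your approach is correct in outline but takes a genuinely different route from the paper.  The paper never passes through $H(\pred)$ or an optimal coupling at all.  Instead it partitions the targets directly by comparing $k_i$ (your $I_j$) to $\log\log(4/p_i)$: for targets found ``early'' it reads off $\sum p_i 2^{k_i}\le\sum p_i\log(4/p_i)=O(H(p))$; for targets found ``late'' it observes that after $k_i-1$ iterations the remaining $\pred$-mass on the current range $[\ell,r]$ is at most $p_i/2$, so in any transport plan at least $p_i/2$ of the mass at $i$ must leave $[\ell,r]$ and travel distance at least $D_i=\min(i-\ell,r-i)$; summing gives $\eta\ge\sum_{I_2} p_i D_i/2$, and the failed endpoint checks force $2^{k_i}\le 2\log D_i$, whence Jensen finishes.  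This is shorter and yields the explicit constants in the statement; your route is more structural but pays for it with the extra $H(\pred)\le H(p)+H(K\mid J)$ step and the max-entropy-under-first-moment lemma, neither of which the paper needs.

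One point in your sketch needs repair.  You invoke the ``Knuth/Mehlhorn depth bound'' and speak of the bisection paths of $j$ and $k$ as if there were a single fixed bisection tree on $\pred$.  There is not: each iteration's endpoint trim shrinks $[\ell,r]$ to $[\ell+d+1,r-d-1]$ \emph{before} the next round of bisection, so the subsequent medians are medians of $\pred$ restricted to the trimmed interval, not nodes of the untrimmed Mehlhorn tree.  Consequently there is no well-defined ``path of $k$'' in the algorithm's execution to diverge from.  The fix is exactly the mass-halving argument the paper uses: every bisection step at least halves the $\pred$-mass on the current range (and trims only help), so after more than $\log(1/\pred_k)$ bisection steps the mass is below $\pred_k$ and $k$ has necessarily been expelled---either by a separating pivot or by an earlier trim---and in either case your close-endpoint invariant is established at that moment.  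From there your monotonicity check and the remainder of your argument go through as written.
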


Before formally proving the theorem, we give key intuition about the analysis. 
In iteration $k$, the Bisection phase is continued for $2^k$ steps. In each step, one comparison is made, which makes the cost of this phase $2^k$.

Consider the Binary Search at the Endpoints phase. Two comparisons are made during the phase unless $a \in [\ell,\ell+d]$ or $a \in [r-d,r]$. In those cases, we run a traditional binary search on an interval of length $d+1$, whose cost is $\log d = \log 2^{2^k} = 2^k$. 

For each key $a_i$, it takes at most $\log (\frac{1}{p_i}) + 1$ iterations of the Bisection phase to get to a search range that has a predicted probability mass of at most $p_i/2$.
We can charge the total cost of the iterations up to this point to the term $p_i \log(\frac{1}{p_i})$ in $H(p)$. 

Either we find $a_i$ earlier, in which case the total cost of the iterations can be charged to $H(p)$, or there is an at least $p_i/2$ mass that was predicted to lie outside the interval, allowing us to lower bound $\eta$. We make this argument formal below. 

\begin{proof}[Proof of Theorem~\ref{thm:analysis}]
    With probability $p_i$, the target key $a$ that we are looking for is $a_i$. 
    The goal is to bound the expected cost of the algorithm, which is $\sum_{i=1}^n p_iC(a_i)$, where $C(a_i)$ is the cost of the algorithm when $a=a_i$. 
    Let $k_i$ be the first iteration at which $a$ is found, assuming $a=a_i$.
    As mentioned earlier, the total cost of the first phase of the iterations 0 to $k_i$ is $\sum_{j=0}^{k_i} 2^j<2^{k_i+1}$. 
    Also, the cost of the second phase in each iteration before $k_i$ is 2, and in iteration $k_i$ is at most $2^{k_i}$.
    So the total cost of the algorithm for iterations $0$ to $k_i$ is at most $3\cdot 2^{k_i} + 2k_i \leq 4 \cdot 2^{k_i}$.
    We partition the keys based on $k_i$ into two sets, and bound the cost of each set separately.
    Let $I_1:=\{i: k_i \leq \log(\log(4/p_i))\}$ and $I_2:=\{i: k_i > \log(\log(4/p_i))\}$.
    
    First, we bound the cost of indices in $I_1$ by a constant factor of $H(p)$:
    \[
    \sum_{i \in I_1} p_i C(a_i) 
    \leq 4\sum_{i \in I_1} p_i 2^{k_i} \leq 4\sum_{i \in I_1} p_i \log(4/p_i) \leq 4H(p)+8.
    \] 
    
    Now we bound the cost of the indices in $I_2$ by a constant factor of $\log(\eta)$.
    Let $i\in I_2$. 
    We know that during iteration $j$ of searching for $a_i$, in the Bisection phase, the predicted probability mass in the search range decreases by a factor of at least $2^{2^j}$.
    Therefore the predicted probability mass in the search range $[\ell,r]$ at the end of the first phase in iteration $k_i-1$ is at most 
    \[\frac{1}{\prod_{j=0}^{k_i-1} 2^{2^j}} =  
        \frac{1}{2^{2^{k_i}-1}} 
        =\frac{2}{2^{2^{k_i}}}
        \leq \frac{2}{4/p_i}
        =\frac{p_i}{2},
    \]
    where the inequality holds because $i\in I_2$.
    So $\sum_{j=\ell}^r \pred_j \leq p_i/2$.
    Let $D_i:=\min(i-\ell,r-i)$.
    In the transportation problem corresponding to the earth mover's distance between $p$ and $\pred$, a probability mass of at least $p_i/2$ needs to be moved from point $i$ to the outside of the interval $[\ell,r]$.
    The cost of this movement in the objective function of the transportation problem is at least $D_i \cdot p_i/2$.
    Therefore we have $\eta \geq \sum_{i \in I_2} D_i \cdot p_i/2$.
    In the Binary Search at the Endpoints phase of iteration $k$, we probe indices within distance $d=2^{2^{k}}$ around the two endpoints of the search range. 
    Since $a_i$ is not found before iteration $k_i$, we conclude that $2^{2^{k_i-1}} < D_i$, which means that $2^{k_i} \leq 2\log(D_i)$.
    Let $p(I_2):=\sum_{i \in I_2} p_i$. We have
    \[
    \sum_{i \in I_2} p_i C(a_i)
    \leq
    4\sum_{i \in I_2} p_i 2^{k_i} \leq 
    8\sum_{i \in I_2} p_i \log(D_i) = 8\left(\sum_{i \in I_2} p_i \log(D_i) + (1-p(I_2)) \log(1)\right).
    \]
    By concavity of the $\log(\cdot)$ function and Jensen's inequality we have
    \[
    8\left(\sum_{i \in I_2} p_i \log(D_i) + (1-p(I_2))\log(1)\right)
    \leq 8 \log \left( \sum_{i\in I_2} p_iD_i + (1-p(I_2)) \right) \leq 8 \max(\log(\eta)+2,1).
    \]
    The last inequality follows from the fact that if $\sum_{i \in I_2} p_iD_i\leq 1$ we have $\sum_{i\in I_2} p_iD_i + (1-p(I_2)) \leq 2$, and otherwise we have
    \[\log \left( \sum_{i\in I_2} p_iD_i + (1-p(I_2)) \right) 
    \leq 
     \log \left( \sum_{i\in I_2} p_iD_i \right)+1
     \leq \log(2\eta) + 1 = \log(\eta) + 2.\]
\end{proof}

\subsection{Lower Bound} \label{sec:lower}

It is well known that there is a lower bound of $\Omega(H(p))$ on the expected query complexity for binary search \citep{mehlhorn1975nearly}.  
We now show that there is a lower bound of $\Omega(\log \eta)$ on the expected query complexity as well, even on instances with $H(p)=0$.  
This shows that there must fundamentally be a $\log \eta$ dependence on the earth mover’s distance, even for instances where it is not absorbed by the dependence on the entropy.

\begin{theorem} \label{thm:lower_bound}
For any $\eta \in [n]$, any comparison-based (deterministic or randomized) algorithm must make $\Omega(\log \eta)$ queries on some instance where $H(p) = 0$ and the earth mover’s distance between $p$ and $\hat p$ is $O(\eta)$.
\end{theorem}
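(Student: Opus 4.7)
The plan is to apply Yao's minimax principle, so that it suffices to exhibit a distribution $\mathcal{D}$ over instances---each with $H(p)=0$ and EMD between $p$ and $\hat p$ at most $O(\eta)$---on which every \emph{deterministic} comparison-based algorithm makes $\Omega(\log \eta)$ queries in expectation.

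For the hard distribution, I would fix a single prediction $\hat p$ concentrated at position $1$ (so $\hat p_1 = 1$), then draw $\alpha^\star$ uniformly from $\{1,2,\ldots,\eta\}$ and let $p$ be the point mass at $\alpha^\star$. Every realization satisfies $H(p) = 0$ trivially, and the earth mover's distance between $p$ and $\hat p$ equals $\alpha^\star - 1 \le \eta - 1 = O(\eta)$, since the unique transport plan must move the unit of mass from position $\alpha^\star$ to position $1$. So every instance produced by this construction lies in the promised class.

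For any deterministic comparison-based algorithm, its execution on the (known) target $a_{\alpha^\star}$ traces a root-to-leaf path in a ternary decision tree whose internal nodes are comparisons with outcomes in $\{<,=,>\}$. To satisfy the correctness requirement of returning $i$ with $a_i = a$, the $\eta$ possible values of $\alpha^\star$ must induce $\eta$ \emph{distinct} leaves---two executions that agree on every comparison outcome necessarily return the same index. A standard Shannon / Kraft argument on a ternary alphabet then gives an expected leaf depth at least $H(\alpha^\star)/\log_2 3 = (\log_2 \eta)/\log_2 3 = \Omega(\log \eta)$ under the uniform law on $\alpha^\star$. Since the expected query complexity of the algorithm on $\mathcal{D}$ is exactly this expected depth, Yao's minimax principle lifts the bound to randomized algorithms against a worst-case instance, yielding the theorem.

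The main delicate step is the ``distinct leaves'' claim---i.e., converting the operational requirement that the algorithm correctly identifies $\alpha^\star$ into the combinatorial statement that the induced map from $\alpha^\star$ to the reached leaf is injective. This relies only on the array entries being distinct and on the algorithm being deterministic (so its transcript of comparison outcomes determines its output), so it should go through cleanly. Once that is in place, the Shannon bound and the Yao invocation are routine, and the resulting lower bound $\Omega(\log \eta)$ matches the $\log \eta$ term of the upper bound in Theorem~\ref{thm:analysis} up to constants.
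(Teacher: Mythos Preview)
Your proof is correct and follows essentially the same approach as the paper: apply Yao's principle to the family of point-mass true distributions $p=\delta_i$ for $i$ uniform in $[\eta]$ against a fixed prediction, then invoke an information-theoretic lower bound for distinguishing $\eta$ positions. The only cosmetic differences are that the paper takes $\hat p$ to be the uniform distribution on $[\eta]$ (rather than a point mass at $1$) and cites Mehlhorn's $\Omega(H(p))$ bound as a black box (rather than your direct ternary Kraft/Shannon argument).
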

\begin{proof}
Thanks to Yao's principle, it suffices to  give a distribution over instances of this problem and argue that any deterministic algorithm has a large expectation over this distribution.
Let the set of keys be $[n]$.
We present a family of problem instances $I_1,\ldots,I_\eta$, where each instance can happen with probability $\frac{1}{\eta}$. 
In instance $I_i$, the true access distribution is a singleton on location $i$, i.e., in $I_i$ we have $p_i=1$ and $p_j=0$ for each $j \in [n] \backslash \{i\}$.
In all the problem instances $I_1,\ldots,I_\eta$, the prediction is the uniform distribution over $[\eta]$.
Note that for each instance $I_i$, we have $H(p)=0$. 
Also, the earth mover's distance between $\hat p$ and $p$ is at most $\eta$.

Our claim is that any deterministic algorithm has an expected cost of $\Omega(\log \eta)$ over this distribution.
To see this, note that the expected cost of any deterministic algorithm over this distribution of instances exactly equals the cost of that algorithm on an instance $I^*$ where the true access distribution is uniform over $[\eta]$. 
Now by the lower bound of~\citep{mehlhorn1975nearly}, the cost of any deterministic comparison-based algorithm on $I^*$ is $\Omega(H(p^*))$, where $p^*$ is the uniform distribution over $[\eta]$. 
To conclude the proof, note that $H(p^*)=\Omega(\log \eta)$.
\end{proof}

Combining \cref{thm:lower_bound} with the $\Omega(H(p))$ lower bound due to \cite{mehlhorn1975nearly} results in the following worst-case lower bound, asymptotically matching \cref{thm:analysis}.

\begin{corollary}
Any comparison-based algorithm for binary search with distributional predictions has worst-case expected query complexity $\Omega(H(p) + \log(\eta))$.
\end{corollary}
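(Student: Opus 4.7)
The plan is to combine the two lower bounds already in hand: the classical $\Omega(H(p))$ bound of~\citet{mehlhorn1975nearly}, and the $\Omega(\log \eta)$ bound from \cref{thm:lower_bound}. Since $\max(a,b) \leq a+b \leq 2\max(a,b)$ for nonnegative reals, it suffices to exhibit, for any comparison-based algorithm $A$, instances realizing each bound separately; the worst case over instances then dominates the max, which is $\Theta(H(p) + \log \eta)$ up to constants.

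Concretely, I would fix any comparison-based algorithm $A$ and any target parameters $H^\star \in [0, \log n]$ and $\eta^\star \geq 1$. Mehlhorn's construction produces a distribution $p$ with $H(p) = \Theta(H^\star)$ on which $A$ uses $\Omega(H^\star)$ expected comparisons \emph{even when} given the perfect prediction $\hat p = p$; in that instance $\eta = 0$, so by the footnote convention the $\log \eta$ term contributes nothing, and the target bound $\Omega(H(p) + \log \eta)$ reduces to the realized $\Omega(H^\star)$. Separately, the distribution over instances constructed in the proof of \cref{thm:lower_bound} has $H(p) = 0$ and EMD $O(\eta^\star)$ while forcing $\Omega(\log \eta^\star)$ expected queries, matching $\Omega(H(p) + \log \eta) = \Omega(\log \eta^\star)$ on that family. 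Taking the worse of the two instances shows that $A$'s worst-case expected cost over all legal $(p,\hat p)$ is at least $\max(\Omega(H^\star), \Omega(\log \eta^\star)) = \Omega(H^\star + \log \eta^\star)$, which by the choice of $H^\star, \eta^\star$ proves the corollary.

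There is no substantive obstacle: Mehlhorn's bound transfers to the distributional-prediction model for free, since $\hat p = p$ is itself a legal (indeed, perfectly accurate) prediction, so the classical no-prediction setting embeds as a special case. The only minor care needed is respecting the $\max(\log \eta, 0)$ convention of the footnote when $\eta < 1$, which is immediate because the $H(p)$ term alone already dominates the claimed bound in that regime.
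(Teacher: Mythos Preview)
Your proposal is correct and takes essentially the same approach as the paper, which simply states (without a written-out proof) that the corollary follows by combining \cref{thm:lower_bound} with Mehlhorn's $\Omega(H(p))$ bound. Your observation that the Mehlhorn bound transfers by setting $\hat p = p$ (so $\eta = 0$) is exactly the right embedding, and the two families of instances you cite each individually witness $\Omega(H(p)+\log\eta)$ with respect to their own parameters, which is all the corollary asserts.
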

\section{A Portfolio of Predictions} \label{sec:portfolio}
In the previous section we showed an algorithm that is optimal given a single distributional prediction. Here we extend this result to the setting where there are $m$ different distributions given as a prediction.   That is, for $k \in \{1,2, \ldots, m\}$, there are predictions $\pred_k=(\pred_{1,k},\ldots,\pred_{n,k})$ of $p$ given. Let $\eta_k$ be the earth mover's distance between $\pred_k$ and $p$.  The goal is to design an algorithm competitive with \emph{single best} distribution $\pred_k$. That is, comparable to $\min_{k \in [m]} \log \eta_k$ and $H(p)$. 

\subsection{Algorithm for Multiple Predictions}

We proceed in a similar manner, alternating the two phases. However, we change the algorithm so that in the first phase the algorithm performs a binary search on the medians of each distribution. The goal of this is to ensure that each distribution has its probability mass drop by at least half in each step. 

As before, the initial search range is $[1,n]$. The algorithm proceeds in iterations.
For $i=0,1,\ldots$, iteration $i$ has two phases.

\begin{itemize}
    \item \textbf{Bisection.} Let $[\ell,r]$ be the current search range.  Let $S_k =\sum_{j=\ell}^r \pred_{j,k}$ be the remaining probability mass in the $k$'th prediction.  
    
    Let $t_k$ be such that   $\sum_{j=\ell}^{t_k-1}\pred_{j,k} \leq \frac{1}{2} S_k$ and $\sum_{j=t_k+1}^r \pred_{j,k} \leq  \frac{1}{2} S_k$.
    That is, $t_k$ is the median of the $k$'th distribution. Sort the indices $k \in [m]$ so that $t_1 \leq t_2 \ldots \leq t_m$. 
    For convenience, let $t_0 = \ell$ and $t_{m+1} = r$.  
    Perform a binary search on $a_{t_0}, a_{t_1}, a_{t_2}, \ldots a_{t_{m+1}}$ to find the interval where $a \in (a_{t_j}, a_{t_{j+1}})$ for some $j \in \{0,1,2, \ldots m\}$.
    The new search range is $[t_j+1,t_{j+1}-1]$.

    Continue this for $2^i$ steps, and if $a$ is not found, begin the second phase described below.
    
    \item \textbf{Binary Search at the Endpoints.} Let $[\ell,r]$ be the current search range. 
    Set $d = \min(2^{2^i},r-\ell)$.  Check if $a$ is in the range $[\ell,\ell + d]$ or $[r-d,r]$ (by comparing $a$ to $a_{\ell+d}$ and $a_{r-d}$).
    If $a$ is in one of these ranges, say $[\ell,\ell+d]$,  do a regular binary search (by choosing the middle point of the range each time) on the range $[\ell,\ell+d]$, until $a$ is found.
    Otherwise, start the next iteration with the new search range $[\ell+d+1,r-d-1]$. 
\end{itemize}

The algorithm continues until $a$ is found.

\subsection{Analysis for Multiple Predictions}

We now state the following theorem regarding the algorithm for multiple predictions.   The overhead of using multiple predictions is a $\log m$ factor. 
The proof is very similar to the proof of \cref{thm:analysis} and has been deferred to \cref{app:proofs}.

\begin{theorem}\label{thm:analysis-multiple}
   Given $m$ different distributions, the expected query complexity of the algorithm is $\log(m) \cdot O(H(p)+\max(\min_{k \in [m]} \log \eta_k,0))$.
\end{theorem}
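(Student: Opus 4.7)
The plan is to mirror the proof of Theorem~\ref{thm:analysis} while tracking the two changes caused by the multi-prediction version: each Bisection step now costs $O(\log m)$ comparisons instead of $1$, and we need a structural argument that a single Bisection step simultaneously halves the remaining mass $S_k$ of \emph{every} predicted distribution, not just one. Once these two facts are in hand, the per-iteration cost accounting and the $I_1/I_2$ partition argument go through almost verbatim, with the minimum over $k$ of $\eta_k$ appearing naturally because the EMD lower bound we derive will hold for each prediction separately.

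The key structural claim I would prove first: after one Bisection step starting from range $[\ell,r]$, for every $k\in[m]$ the quantity $S_k$ drops by at least a factor of $2$. The reason is that the step locates the unique interval $(a_{t_j},a_{t_{j+1}})$ containing $a$, and since the medians are sorted $t_1\le\cdots\le t_m$, every $k$ satisfies either $t_k\le t_j$ or $t_k\ge t_{j+1}$. In the first case the new range lies entirely to the right of $t_k$, so by definition of the median its $\hat p_k$-mass is at most $S_k/2$; symmetrically in the second case. Thus after $2^i$ Bisection steps in iteration $i$, each $S_k$ has dropped by a factor of at least $2^{2^i}$, exactly as in the single-prediction case, but now uniformly in $k$.

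Next I would account for the cost of iterations $0,\dots,k_i$ where $k_i$ is the iteration in which $a=a_i$ is found. The Bisection phase of iteration $j$ consists of $2^j$ steps, each a binary search over at most $m+2$ indices, hence cost $O(2^j \log m)$. Summing geometrically gives $O(2^{k_i}\log m)$. The Endpoint phase still contributes at most $2$ per iteration before $k_i$ and at most $2^{k_i}+2$ in the final iteration, which is absorbed. So $C(a_i) = O(2^{k_i}\log m)$. Now I would partition the keys as before: $I_1=\{i:k_i\le \log\log(4/p_i)\}$ and $I_2=\{i:k_i>\log\log(4/p_i)\}$. For $I_1$ the bound is
\[
\sum_{i\in I_1} p_i C(a_i) \;=\; O(\log m)\sum_{i\in I_1} p_i \log(4/p_i) \;=\; O(\log m \cdot H(p)).
\]

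For $I_2$, the structural claim implies that for \emph{every} $k$, the mass $S_k$ in the current range after the Bisection phase of iteration $k_i-1$ is at most $p_i/2$. So for each fixed $k$, in the optimal transport from $p$ to $\hat p_k$ at least $p_i/2$ mass must leave the current interval around $i$, contributing at least $D_i p_i/2$ to $\eta_k$ (with $D_i$ defined as in the proof of Theorem~\ref{thm:analysis}). This yields $\eta_k \ge \sum_{i\in I_2} D_i p_i/2$ for every $k$, hence $\min_k \eta_k \ge \sum_{i\in I_2} D_i p_i/2$. Since $a_i$ was not found in the Endpoint phase of iteration $k_i-1$, $2^{k_i}\le 2\log D_i$ just as before. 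Applying Jensen to $\log(\cdot)$ exactly as in Theorem~\ref{thm:analysis}, with $\eta$ replaced by $\min_k \eta_k$, gives $\sum_{i\in I_2}p_i C(a_i)=O(\log m\cdot \max(\log\min_k\eta_k,0))$. Combining the two bounds proves the theorem.

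The main obstacle, and the only genuinely new ingredient, is the uniform-halving claim in the first paragraph. Everything else is a bookkeeping change that introduces the single multiplicative $\log m$ factor from probing the $m$ sorted medians per Bisection step; the EMD lower bound transfers to $\min_k \eta_k$ for free because the mass-shrinkage argument holds simultaneously for all predictions.
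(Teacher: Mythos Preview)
Your proposal is correct and follows essentially the same approach as the paper's proof: both rely on the key observation that a single Bisection step forces every median $t_k$ outside the new range and hence halves each $S_k$, then repeat the $I_1/I_2$ partition and Jensen argument from Theorem~\ref{thm:analysis} with the extra $\log m$ factor per Bisection step. The only cosmetic difference is that the paper fixes $k^*=\argmin_k\eta_k$ at the outset and tracks only $\hat p_{k^*}$, whereas you track all predictions simultaneously and take the minimum at the end; these are equivalent, and your more explicit justification of the uniform-halving claim is a nice addition.
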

\section{Experiments} \label{sec:exps}

We now present an empirical evaluation of the proposed algorithms on both synthetic and real datasets. Our goal is to show how predictions can be used to improve the running time of traditional binary search approaches. Since our theoretical results are about query complexity, and to keep the results implementation-independent, our main metric will be the number of comparisons performed by each method.  
Our implementation can be found at \href{https://github.com/AidinNiaparast/Learned-BST}{https://github.com/AidinNiaparast/Learned-BST}.

We compare the performance of the following algorithms:

\begin{itemize}
    \item \textbf{Classic} - The prediction agnostic approach that recursively queries the midpoint of the array. 
    \item \textbf{Bisection} - The bisection algorithm recursively queries the median of the predicted distribution (when the predicted probability in the search range is 0, this algorithm queries the midpoint of the array).
    This strategy is nearly optimal when the predicted distribution is correct~\citep{mehlhorn1975nearly}; however, it is not robust to errors in the predicted distribution. 
    \item \textbf{Learned BST} - The algorithm described in Section~\ref{sec:alg}. We make one modification, setting the parameter $d$ larger to broaden the search space in the very early iterations, setting d to  $\min(2^{8 \cdot 2^i},r-\ell)$. 

    \item \textbf{Convex Combination} - This is a heuristic approach to make the Bisection algorithm more robust. Given a prediction  $\hat{p}$ we generate a new distribution, $q = \lambda \hat{p} + (1 - \lambda)u$, where $u$ is the uniform distribution on $[n]$. We then run the Bisection algorithm on $q$.
    In our experiments, $\lambda = 0.5$ is used. 
\end{itemize}

\subsection{Synthetic Data Experiments}
We begin with experiments on synthetic data where we can vary the prediction error in a controlled environment to show the algorithms sensitivity and robustness to mispredictions. 

In this setting, let the keyspace be the integers in $[-10^5, 10^5]$. We then generate $t = 10^4$ independent points from a normal distribution with mean $0$ and standard deviation $10$, rounding down each to the nearest integer. This results in a concentrated distribution in a very large key space. The $t$ points form the predicted distribution, $\hat{p}$. To generate the test distribution, we proceed in the same manner, but shift the mean of the normal distribution away from $0$ by some value $s > 0$. Note that for $s = 100$ the train and test distributions have 0 overlap with high probability. 
For each value of $s$, we repeat the experiment $5$ times and report the average and standard deviation of the costs.

\begin{figure}
    \centering
    \includegraphics[width=0.5\textwidth]{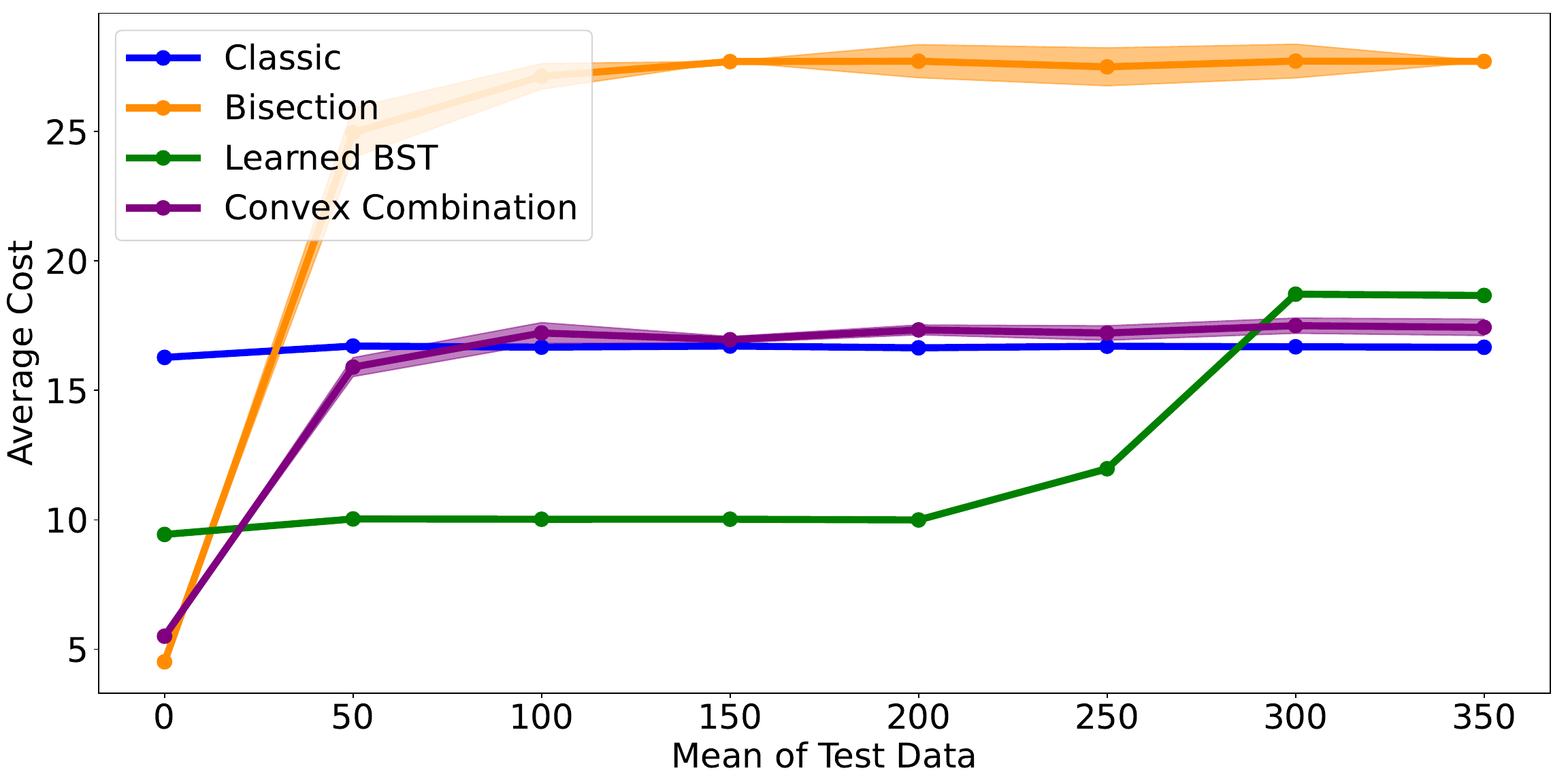}
    \caption{Results for synthetic data experiments.  
    The y-axis measures the average cost (query complexity) of each algorithm and the x-axis measures the amount of shift in the test distribution. The training and test data are regenerated 5 times. The solid lines
    are the mean and the clouds around them are the standard deviation of these experiments.}
    \label{fig:synthetic_results}
\end{figure}

Our results for this setting can be found in \cref{fig:synthetic_results}, where we plot the average search cost (query complexity) of each algorithm against the shift amount for the test distribution.
At one extreme, where there is no shift in the test distribution, we observe that all three algorithms which utilize the predicted distribution perform well. 
Since the bisection algorithm is optimal when the error is 0, it performs the best, as expected, while the  Learned BST approach exhibits some overhead due to hedging against possible errors.
However,  a  perturbation to the predicted distribution causes the bisection algorithm to perform worse than classical binary search.
Both the convex combination and learned BST algorithms demonstrate a smoother degradation in performance, with our proposed method (learned BST) giving more robust performance to even large shifts in the test distribution. When the erorr becomes very high, then the additional overhead of the learned BST algorithm makes it slightly worse than the Classic baseline.

\subsection{Real Data Experiments}
\paragraph{Dataset Description.} 
In order to test our approach on real-world data, we use temporal networks from Stanford Large Network Dataset Collection\footnote{https://snap.stanford.edu/data/index.html}.
These datasets represent the interactions on stack exchange websites StackOverflow, 
AskUbuntu, and SuperUser~\citep{paranjape2017motifs}. 
In all cases, we use the answers-to-questions dataset, which contains entries of the form $(u,v,t)$, which represents user $u$ answering user $v$'s question at time $t$. 
In this interaction, $u$ is the source and $v$ is the target user.
Our data sequences consist of the source users from each interaction sorted in increasing order of timestamp, and we restrict the dataset to the first one million entries.

\paragraph{Keys, Predictions, and Test Data.} For each data sequence, the set of elements in the first 10\% of the sequence is used as the set of keys of the binary search trees.
Let $A$ be the remaining 90\% of the sequence and let $a_1<a_2<\ldots<a_n$ be the set of keys.
For each element $x \in A$, if $a_i \leq x < a_{i+1}$, we replace $x$ by $a_i$.
For $t=5,10,\ldots,50$, we use the first $t$ percent of $A$ as training data and the rest as test data.
The training and test data are used to obtain the predictions ($\hat p$) and actual access distribution ($p$), respectively.
To obtain these distributions we use the normalized frequencies of each key in the training and test data.

For completeness, we show the distributions of the keys when $t=50$ both for the training set and the test set in Figure \ref{fig:real_data_train-test}. 

\begin{figure}
    \centering
    \begin{subfigure}[h]{0.325\textwidth}
        \centering
        \includegraphics[width=\textwidth]{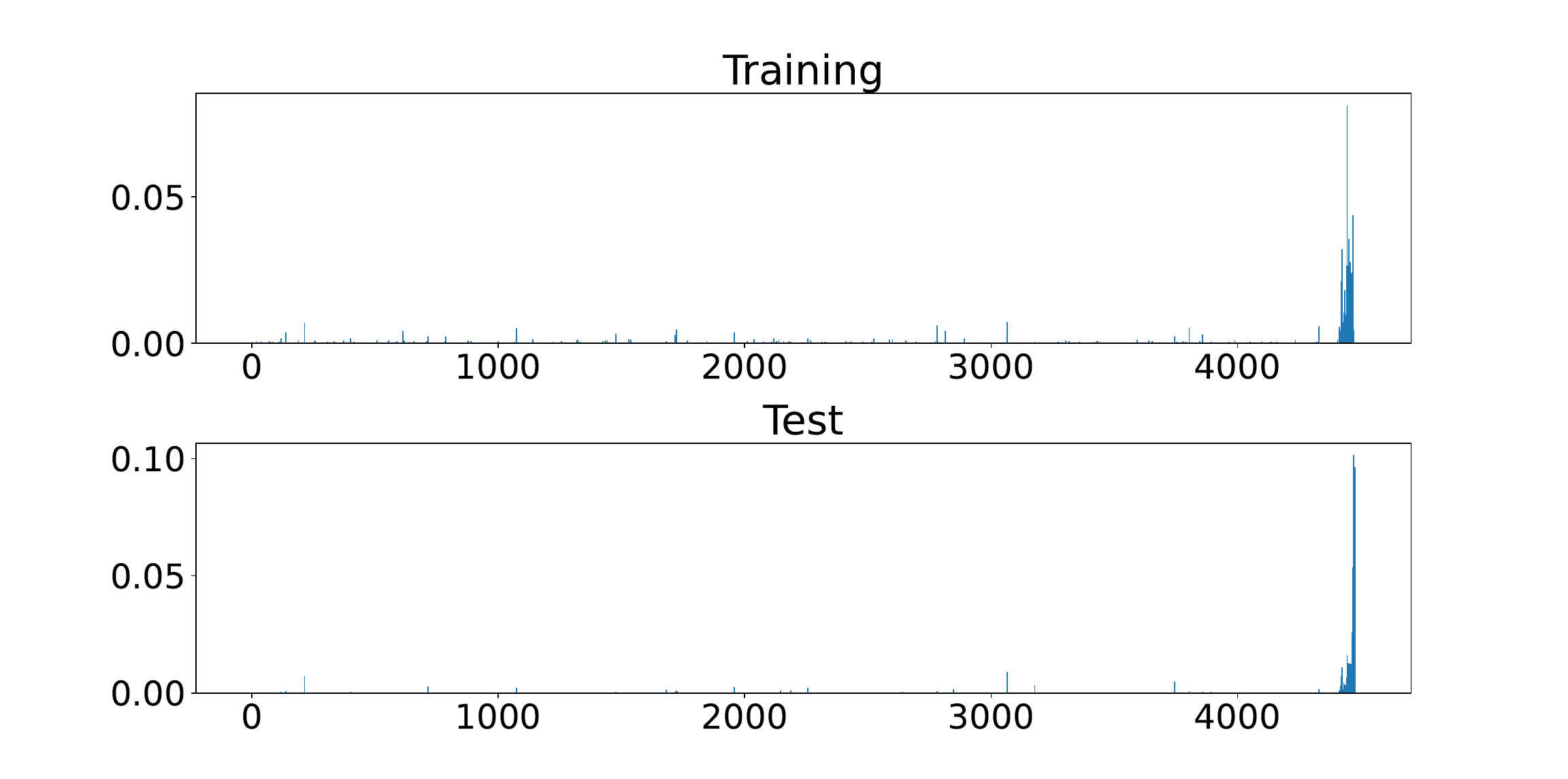}
        \caption{AskUbuntu}
    \end{subfigure}
    \begin{subfigure}[h]{0.325\textwidth}
        \centering
        \includegraphics[width=\textwidth]{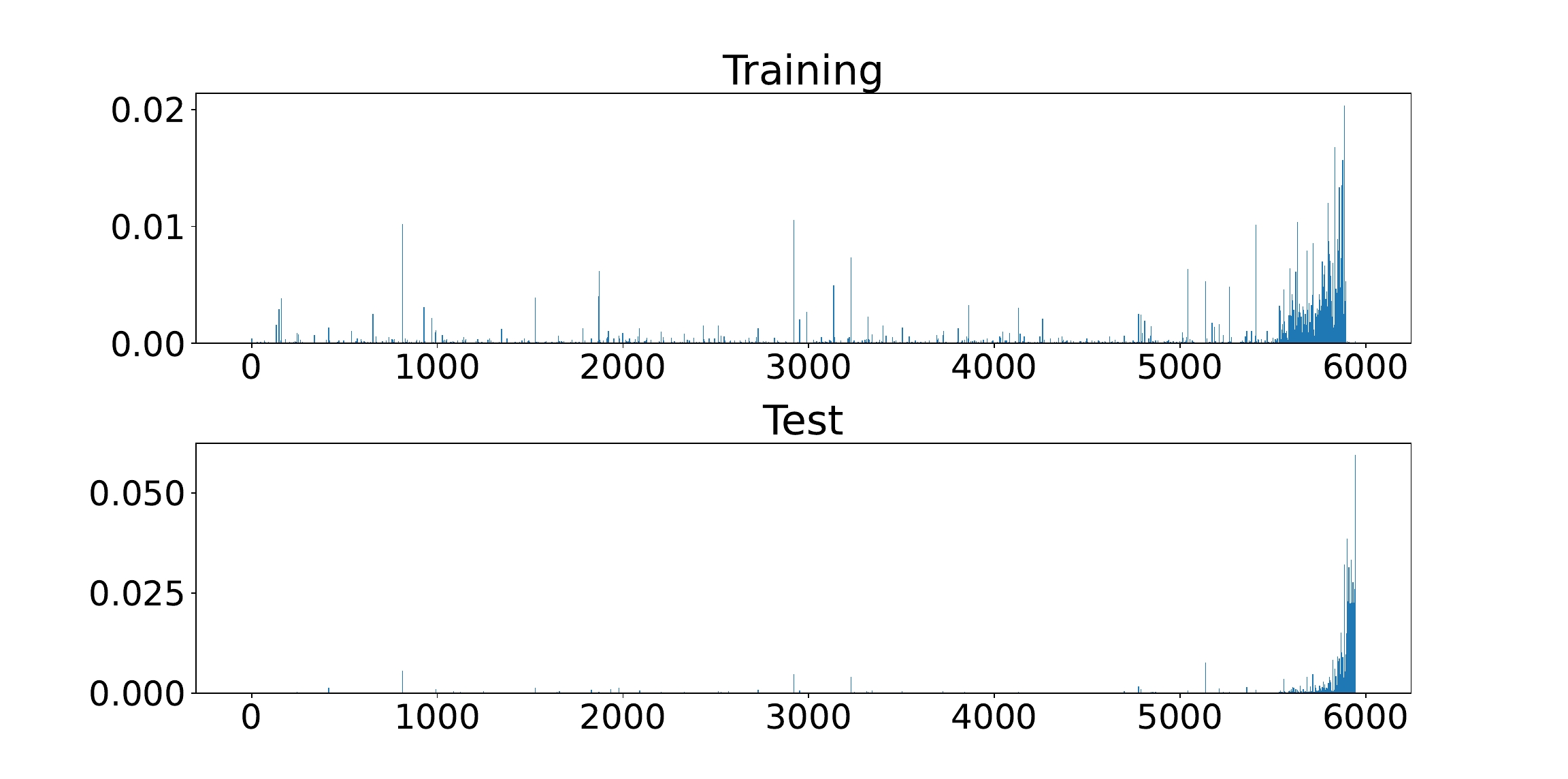}
        \caption{SuperUser}
    \end{subfigure}
      \begin{subfigure}[h]{0.325\textwidth}
        \centering
        \includegraphics[width=\textwidth]{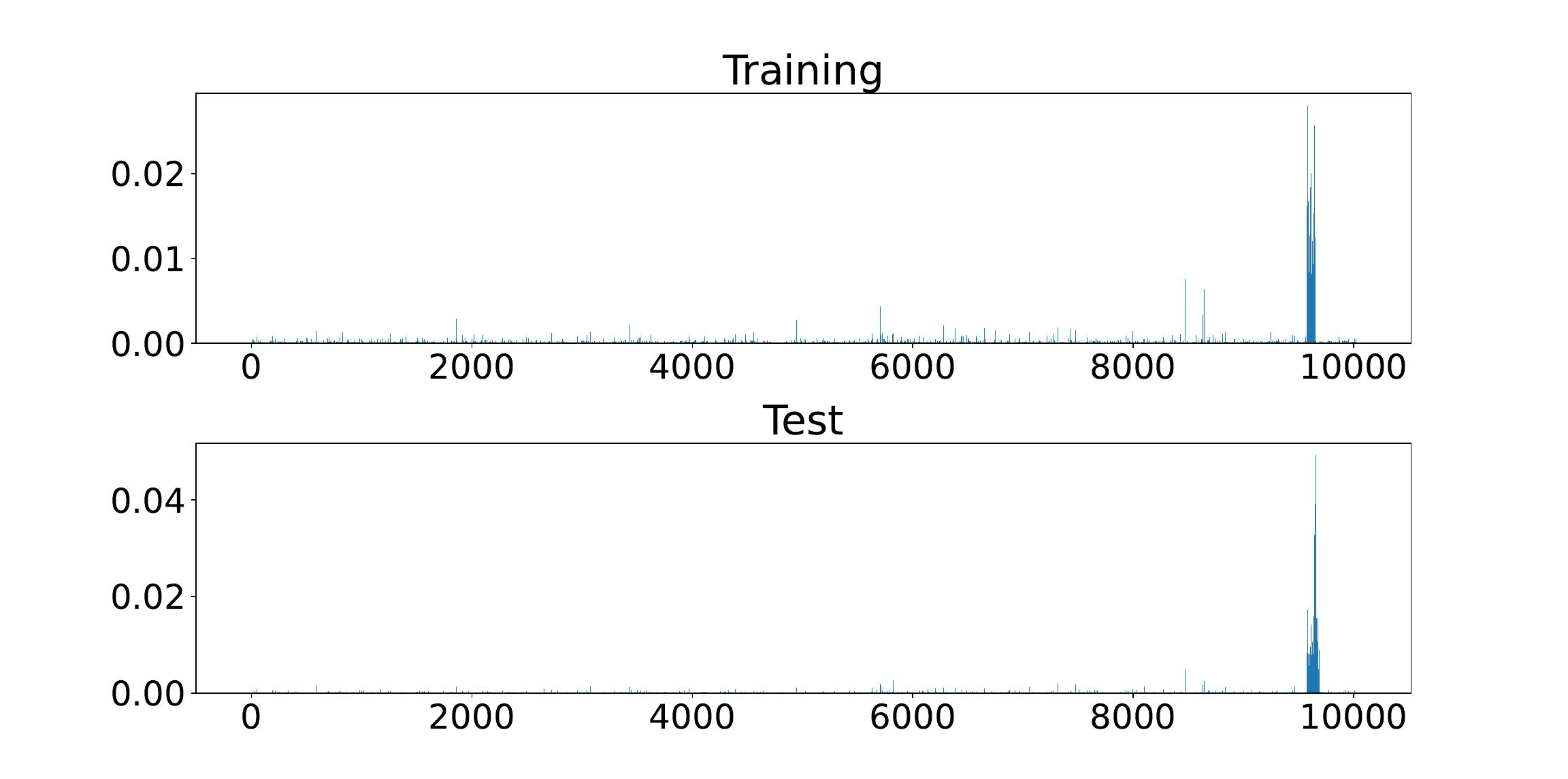}
        \caption{StackOverflow}
    \end{subfigure}
    
    \caption{The train and test distributions when $t=50$ for the three datasets. }
    \label{fig:real_data_train-test}
\end{figure}

\begin{figure}
    \centering
    \begin{subfigure}[h]{0.33\textwidth}
        \centering
        \includegraphics[width=\textwidth]{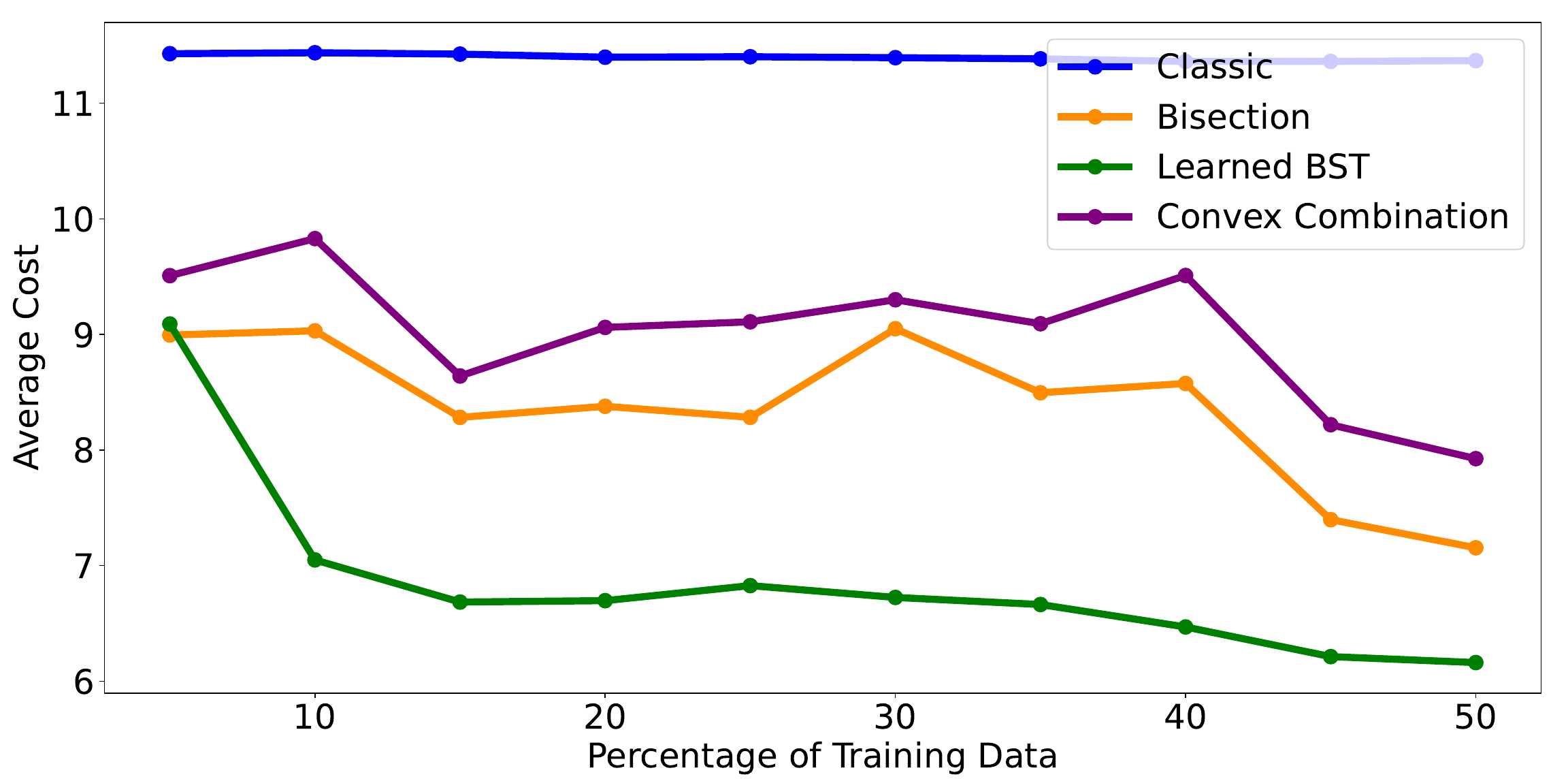}
        \caption{AskUbuntu}
    \end{subfigure}
    \begin{subfigure}[h]{0.31\textwidth}
        \centering
        \includegraphics[width=\textwidth]{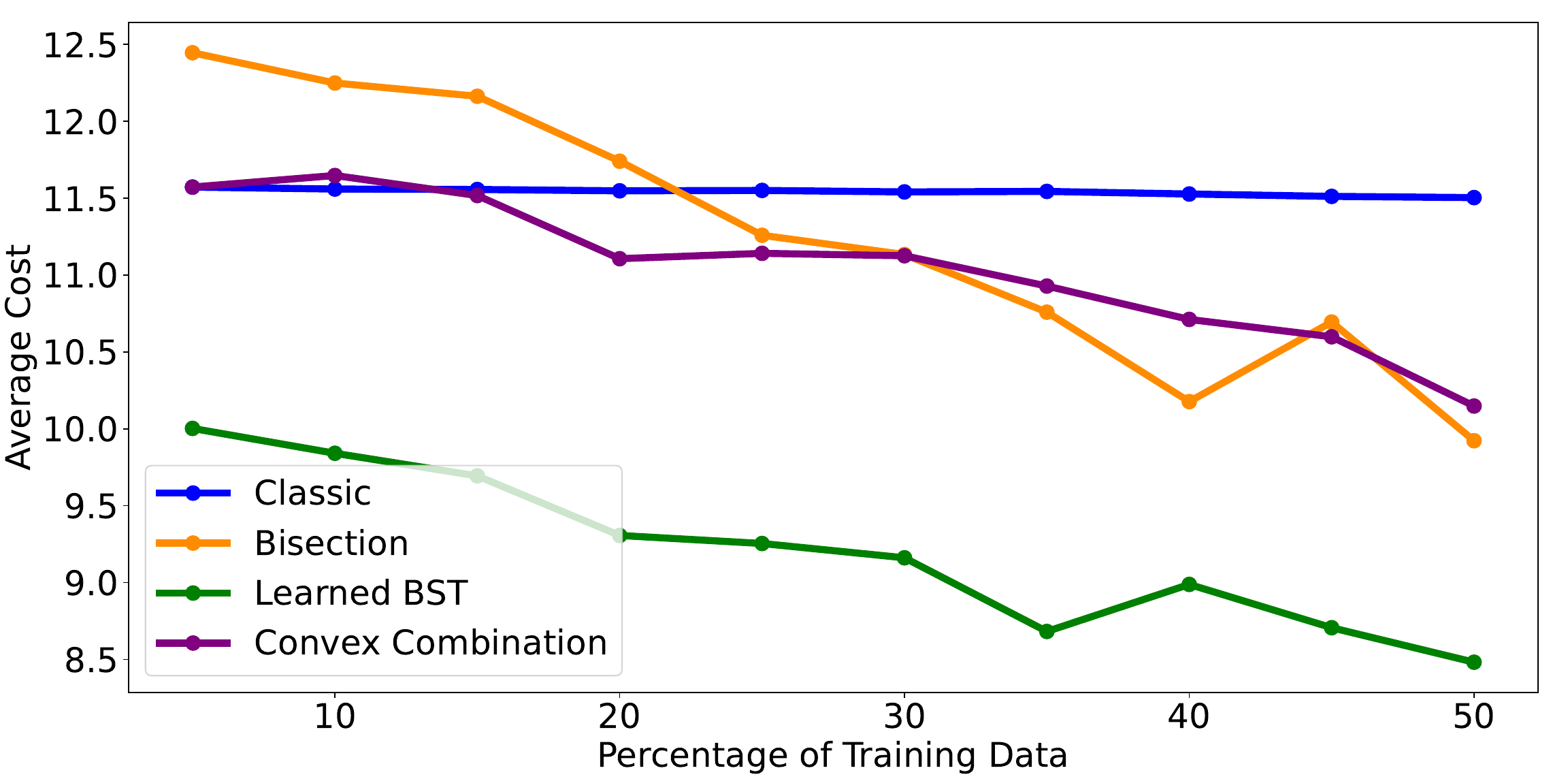}
        \caption{SuperUser}
    \end{subfigure}
      \begin{subfigure}[h]{0.31\textwidth}
        \centering
        \includegraphics[width=\textwidth]{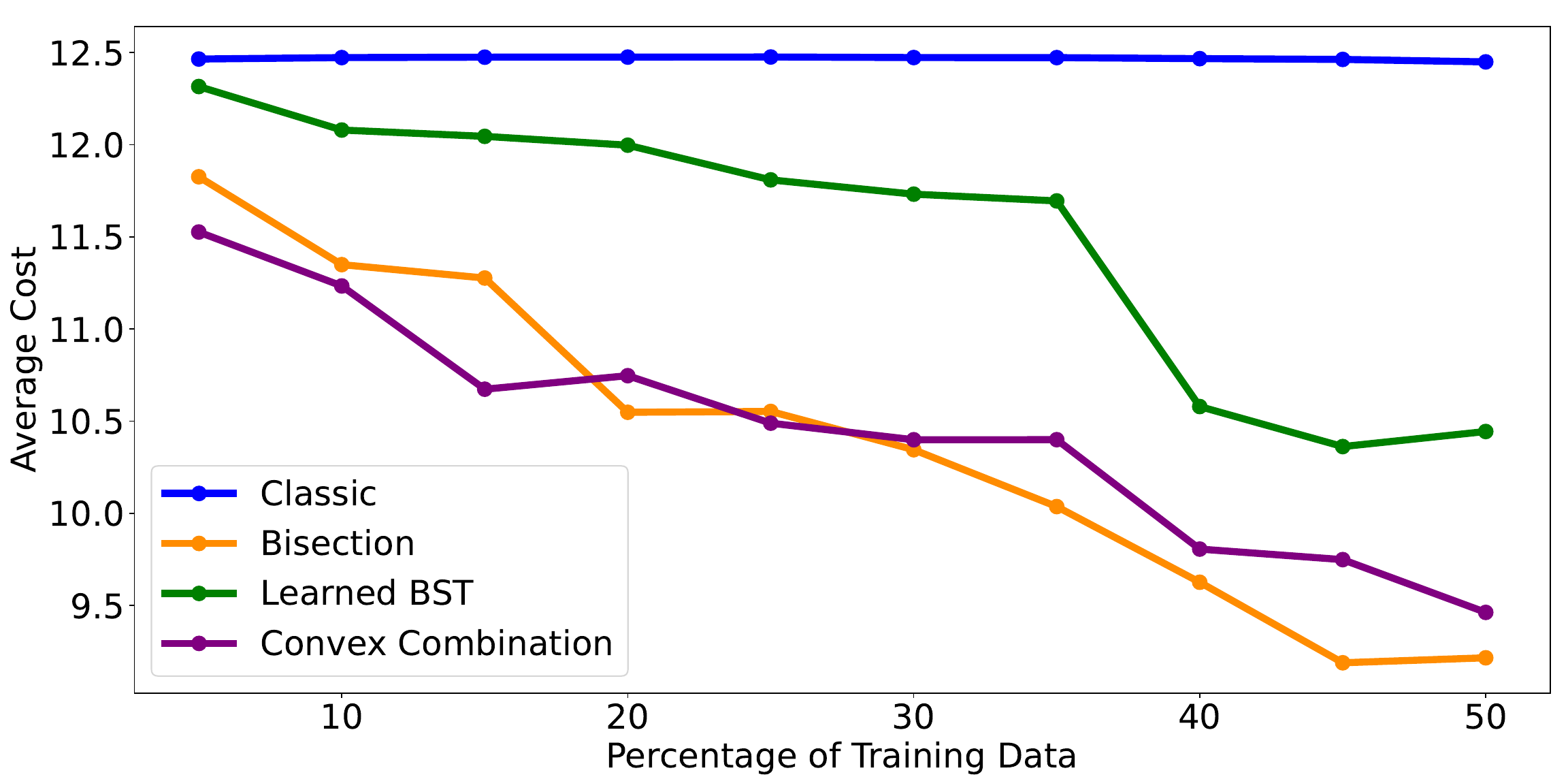}
        \caption{StackOverflow}
    \end{subfigure}
    
    \caption{Results for real data experiments.  The y-axis measures the average cost of each algorithm and the x-axis indicates the fraction of the dataset used for training}
    \label{fig:real_data_results_time}
\end{figure}

\begin{figure}
    \centering
    \begin{subfigure}[h]{0.31\textwidth}
        \centering
        \includegraphics[width=\textwidth]{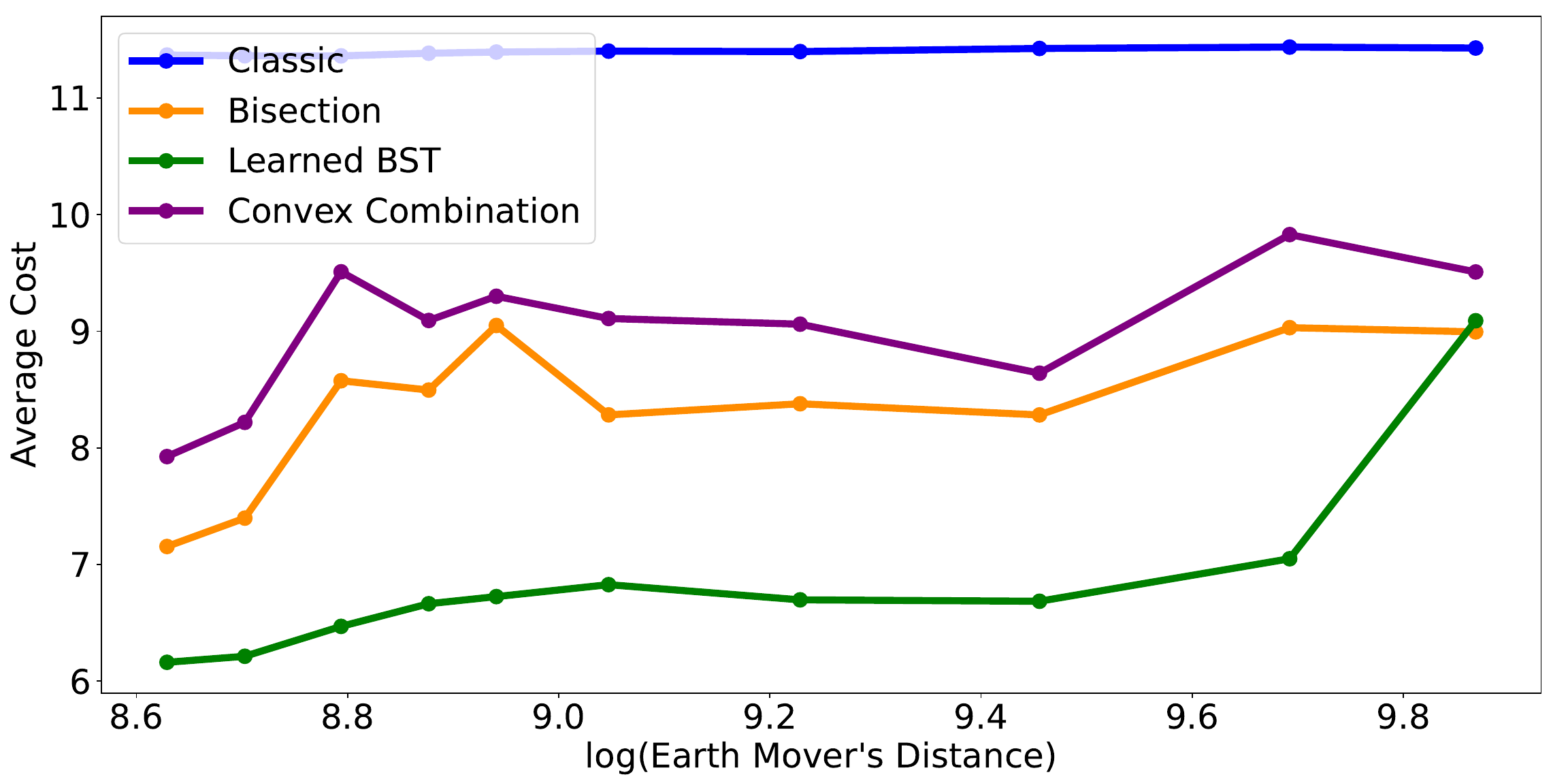}
        \caption{AskUbuntu}
    \end{subfigure}
    \begin{subfigure}[h]{0.31\textwidth}
        \centering
        \includegraphics[width=\textwidth]{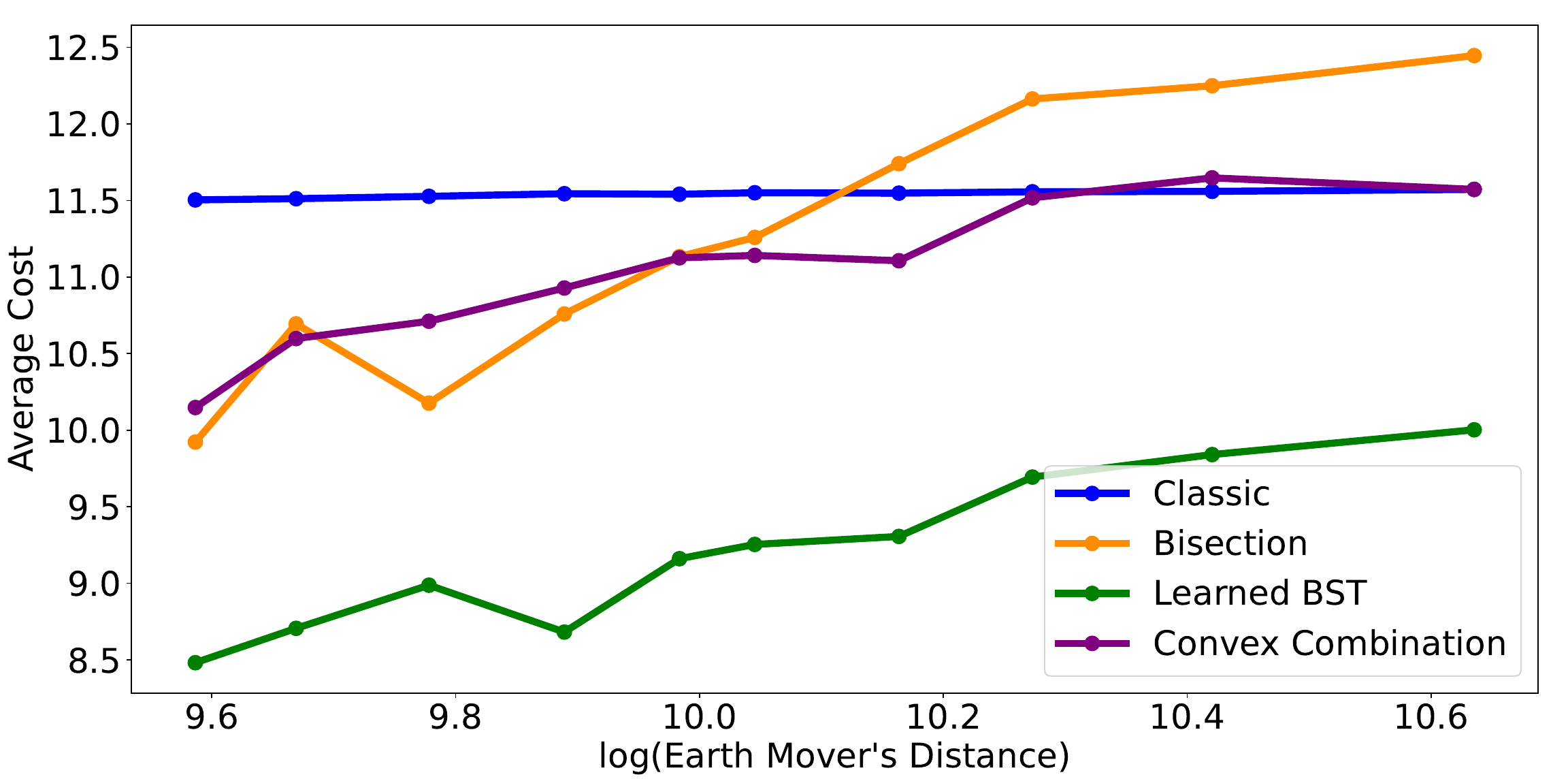}
        \caption{SuperUser}
    \end{subfigure}
      \begin{subfigure}[h]{0.31\textwidth}
        \centering
        \includegraphics[width=\textwidth]{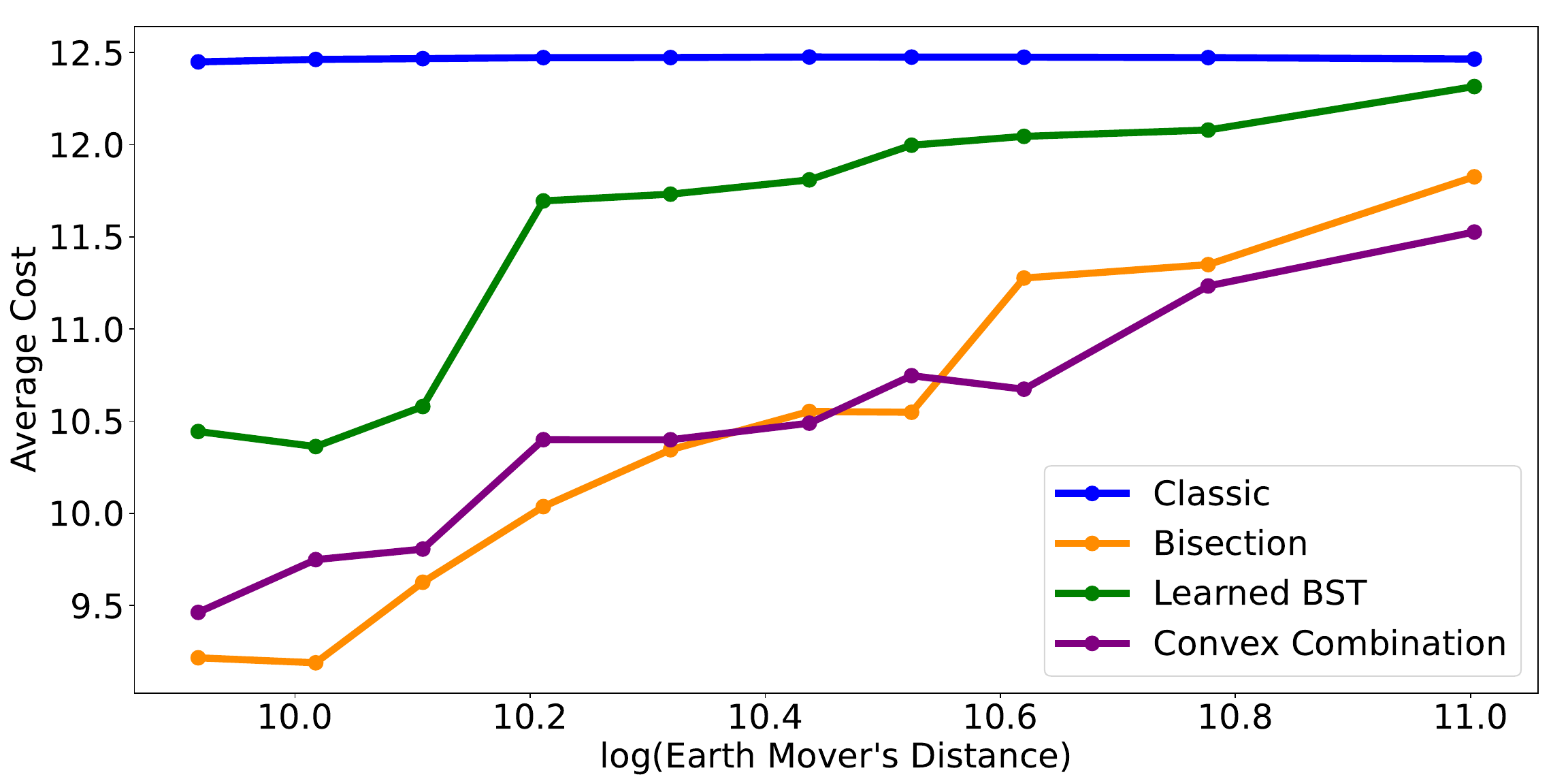}
        \caption{StackOverflow}
    \end{subfigure}
    
    \caption{Results for real data experiments.  The y-axis measures the average cost of each algorithm and the x-axis indicates the logarithm of the earth mover's distance between $\hat p$ and $p$.}
    \label{fig:real_data_results_emd}
\end{figure}

We present the results on these experiments in Figures \ref{fig:real_data_results_time} and \ref{fig:real_data_results_emd}. In Figure \ref{fig:real_data_results_time} we plot the average cost of the algorithms against the size of the training data. As we expect, as the size of the training data increases, the performance of all distribution-dependent algorithms get better, as the distribution error decreases. We make this more precise in Figure \ref{fig:real_data_results_emd} where we plot the average cost against log of the EMD error. 

We note a few observations. The learning agnostic, Classic, is suboptimal in all but a handful of cases, showing that there is value in using the distribution of the data to improve performance. Second, we validate the theory, showing that the learned BST's performance degrades smoothly as $\log \eta$ increases. Third, the convex combination heuristic is not very effective on real world data, giving only marginal improvements over the bisection method. 

Finally, on both AskUbuntu and SuperUser datasets, the learned BST approach performs significantly better than all of the baselines, saving 20-25\% comparisons on average. Unlike the Bisection algorithm it is also never worse than the Classic baseline. On the StackOverflow dataset our approach is about 10\% worse than bisection method, owing to the distribution being less  concentrated around the median. In these cases, the overhead of learned BST is apparent, given that the second phase is unlikely to be fruitful in the first few iterations. 

Overall, these results show that the Learned BST method is robust against errors, and performs well against other approaches. Further improving the constant factors so that the learned approach has strong worst-case guarantees and performs well against other learned approaches remains a challenging open problem. 
\section{Conclusion} \label{sec:conclusion}

There has been a growing line of work showing how to improve optimization algorithms using machine learned predictions. Predominately, prior work has leveraged non-probabilistic predictions, despite the fact that most ML systems, such as neural networks, output a distribution.  

This work introduces a model where the prediction is a distribution.  We show that algorithms can perform better by taking full advantage of the distributional nature of the prediction, and that reduction to a point prediction is insufficient to provide competitive algorithms. 

Given the breadth of work in the Algorithms with Predictions area~\cite{MitzenmacherVassilvitskii}, there is a wide variety of open questions concerning how to adapt algorithms to the setting of distributional predictions. 

\bibliographystyle{plainnat}
\bibliography{references}
\appendix
\newpage

\section{Omitted Proofs}
\label{app:proofs}
\begin{proof}[Proof of Theorem~\ref{thm:analysis-multiple}]
With probability $p_i$, the key $a$ that we are looking for is $a_i$. 
    The goal is to bound the expected cost of the algorithm, which is $\sum_{i=1}^n p_iC(a_i)$, where $C(a_i)$ is the cost of the algorithm when $a=a_i$. 
    In each step of the Bisection phase, a binary search is done on the medians of the predicted probability distributions on the current search range.
    When the binary search is done, for each prediction $k$, the median $t_k$ of the prediction falls outside of the new search range.
    This means that the probability mass of $\pred_k$ on the new search range has dropped by at least a factor of 2 compared to the initial search range before the binary search.
    Let $k^* = \argmin_{k \in [m]} \eta_k$.
    From the above discussion, the probability mass of $\pred_{k^*}$ in the search range drops by a factor of at least $2$ in each step of the Bisection phase, which results in a total drop of at least $2^{2^j}$ in iteration $j$ of the algorithm.
    The cost of each Bisection step is $\log m$, which makes the total cost of the Bisection phase in iteration $j$ equal to $(\log m)\cdot 2^j$.
    Let $T_i$ be the first iteration at which $a$ is found, assuming $a=a_i$.
    The total cost of the first phase of the iterations 0 to $T_i$ is $(\log m)\sum_{j=0}^{T_i} 2^j<(\log m)\cdot 2^{T_i+1}$. 
    Also, the cost of the second phase in each iteration before $T_i$ is 2, and in iteration $T_i$ is at most $2^{T_i}$.
    So the total cost of the algorithm for iterations $0$ to $T_i$ is at most $(\log m) \cdot 2^{T_i+1} + 2^{T_i} + 2T_i = O( (\log m) \cdot 2^{T_i})$. 
    We partition the keys based on $T_i$ into two sets, and bound the cost of each set separately.
    Let $I_1:=\{i: T_i \leq \log(\log(4/p_i))\}$ and $I_2:=\{i: T_i > \log(\log(4/p_i))\}$.
    
    First, we bound the cost of indices in $I_1$ by  $(\log m) \cdot O(H(p))$:
    \[
    \sum_{i \in I_1} p_i C(a_i) 
    = O\left(\sum_{i \in I_1} p_i \left((\log m) \cdot2^{T_i}\right)\right) =(\log m) \cdot O(\sum_{i \in I_1} p_i \log(4/p_i)) = (\log m) \cdot O(H(p)).
    \] 
    
    Now we bound the cost of the indices in $I_2$ by $(\log m) \cdot O\left( \max(\log(\eta_{k^*}),1)\right)$.
    Let $i\in I_2$. 
    We know that during iteration $j$ of searching for $a_i$, in the Bisection phase, the predicted probability mass $\pred_{k^*}$ in the search range decreases by a factor of at least $2^{2^j}$.
    Therefore the predicted probability mass $\pred_{k^*}$ in the search range $[\ell,r]$ at the end of the first phase in iteration $T_i-1$ is at most 
    \[\frac{1}{\prod_{j=0}^{T_i-1} 2^{2^j}} =  
        \frac{1}{2^{2^{T_i}-1}}
        =\frac{2}{2^{2^{T_i}}}
        \leq \frac{2}{4/p_i}
        =\frac{p_i}{2},
    \]
    where the inequality holds because $i\in I_2$.
    So $\sum_{j=\ell}^r \pred_{j,k^*} \leq p_i/2$.
    Let $D_i:=\min(i-\ell,r-i)$.
    In the transportation problem corresponding to the earth mover's distance between $p$ and $\pred_{k^*}$, a probability mass of at least $p_i/2$ needs to be moved from point $i$ to the outside of the interval $[\ell,r]$.
    The cost of this movement in the objective function of the transportation problem is at least $D_i \cdot p_i/2$.
    Therefore we have $\eta_{k^*} \geq \sum_{i \in I_2} D_i \cdot p_i/2$.
    In the Binary Search at the Endpoints phase of iteration $j$, we probe indices distance of $d=2^{2^{j}}$ around the two endpoints of the search range. 
    Since $a_i$ is not found before iteration $T_i$, we conclude that $2^{2^{T_i-1}} < D_i$, which means that $2^{T_i} \leq 2\log(D_i)$.
    Let $p(I_2):=\sum_{i \in I_2}p_i$.
    We have
    \begin{align}
        \sum_{i \in I_2} p_i C(a_i) &\leq 
    (\log m) \cdot O\left(\sum_{i \in I_2} p_i 2^{T_i} \right)\\
    &\leq 
    (\log m) \cdot O\left(\sum_{i \in I_2} p_i \log(D_i)\right) \\
    &\leq 
    (\log m) \cdot O\left(\sum_{i \in I_2} p_i \log(D_i) +(1-p(I_2))\log(1)\right) \\
    &\leq 
    (\log m) \cdot O\left(\log\left(\sum_{i \in I_2} p_i D_i + (1-p(I_2))\right)\right) \label{eq:jensen}\\
    &\leq
    (\log m) \cdot O\left( \max(\log(\eta_{k^*}),1)\right) \label{eq:max},
    \end{align}
    where inequality~\eqref{eq:jensen} results from concavity of $\log(\cdot)$ function and Jensen's inequality, and inequality~\eqref{eq:max} is because of the following
    \begin{itemize}
        \item If $\sum_{i \in I_2}p_iD_i \leq 1$ then we have
        \[
        \log\left(\sum_{i \in I_2} p_i D_i + (1-p(I_2))\right) \leq \log(2)=1
        \]
        \item If $\sum_{i \in I_2}p_iD_i > 1$ then we have
        \[
        \log\left(\sum_{i \in I_2} p_i D_i + (1-p(I_2))\right) 
        \leq 
        \log \left( \sum_{i \in I_2}p_iD_i \right)+1
        = O(\log(\eta_{k^*})).
        \] 
    \end{itemize}
\end{proof}

\section{Experimental Setup} 
We use Python 3.10 for conducting our experiments on a system equipped with an 11th Generation Intel Core i7 CPU running at 2.80GHz, 32GB of RAM, a 128GB NVMe KIOXIA disk drive, and a 64-bit Windows 10 Enterprise operating system. It's worth noting that the cost of the algorithms, i.e., the expected query complexity, is hardware-independent.
\end{document}